\title{Generalized Equivariance and Preferential Labeling\\
for GNN Node Classification}
\author{
        Zeyu Sun$^{\dag}$\ \ 
        Wenjie Zhang$^{\dag}$\ \ 
        Lili Mou$^{\ddag}$\ \ 
        Qihao Zhu$^{\dag}$\ \ 
        Yingfei Xiong$^{\dag}$\ \ 
        Lu Zhang$^{\dag}$\\
        }
\newtheorem{theorem}{Theorem}
\newcommand{\techname}{{Preferential Labeling}\xspace}
\newcommand{\newcite}[1]{\citeauthor{#1}~(\citeyear{#1})}
\newcommand{\taua}{\tau_*^{(X)}}
\newcommand{\taub}{\tau_*^{(\pi(X))}}
\begin{document}

\maketitle

\begin{abstract}
Existing graph neural networks (GNNs) largely rely on node embeddings, which represent a node as a vector by its identity, type, or content. However, graphs with unattributed nodes widely exist in real-world applications (e.g., anonymized social networks). Previous GNNs either assign random labels to nodes (which introduces artefacts to the GNN) or assign one embedding to all nodes (which fails to explicitly distinguish one node from another). Further, when these GNNs are applied to unattributed node classification problems, they have an undesired equivariance property, which are fundamentally unable to address the data with multiple possible outputs.  
In this paper, we analyze the limitation of existing approaches to node classification problems. Inspired by our analysis, we propose a generalized equivariance property and a Preferential Labeling technique that satisfies the desired property asymptotically. Experimental results show that we achieve high performance in several unattributed node classification tasks.\footnote{The code and data are available at \\ https://github.com/zysszy/Preferential-Labeling}

\end{abstract}

\section{Introduction}
\label{ss:intro}

Graphs are a widely used type of data structure in computer science. A graph can be represented as $G = \langle V, E \rangle$, where $V$ is a set of \textit{nodes}, and $E$ is a set of node pairs known as \textit{edges} (directed or undirected). 
With the prosperity of deep learning techniques, graph neural networks (GNNs) are shown to be effective to various graph-related applications, such as program analysis~\cite{mou2016convolutional}, social networks~\cite{hamilton2017inductive}, knowledge graphs~\cite{hamaguchi2017knowledge}, molecule analysis~\cite{scarselli2008graph}, and the satisfiability (SAT) problem~\cite{zwj}. 

Existing GNNs highly rely on node embeddings, which are a vector representation of a node, typically based on its identity, type, or content. For example, a GNN for a knowledge graph typically embeds an entity/concept (e.g., a ``cat'' and a ``mammal'') as a vector~\cite{wang2018zero}, whereas a GNN for molecules embeds the atom (e.g., hydrogen and oxygen atoms) as a vector~\cite{scarselli2008graph}.  

In many applications, however, the nodes in a graph may not be attributed, and we call such a graph an \emph{unattributed graph}. A common scenario is that there is no attribute related to the nodes. For example, community detection for large-scale social networks may lack the identity information of nodes, i.e., a person, possibly due to privacy concerns~\cite{DBLP:conf/www/BackstromDK07}. Another scenario is that the attribute of a node is an artefact and captures no semantic meanings. Figure~\ref{fig:example} shows a graph that represents a propositional satisfiability (SAT) problem~\cite{selsam2018learning}, where $x_1$ and $c_1$ are arbitrary namings of the \textit{literals} (variable or its negations) and \textit{clauses} (disjunction of literals), and could be renamed without changing the nature of the formula. If such an identifier is represented by table look-up embeddings, it would become an \textit{artefact} in the GNN, because these embeddings do not represent common knowledge among different training samples. Nor do they generalize to new samples. 

To encode unattributed graphs, previous methods typically adopt an arbitrary labeling for nodes and represent them by embeddings~\cite{allamanis2017learning,WeiGDD20}. As mentioned, this introduces artefacts to GNNs. Recently, \newcite{selsam2018learning} have realized that such artefacts are undesired, and assign all nodes with the same embedding. 
However, this approach may suffer from the problem that the graph neural network becomes insensitive to the nodes.

In this work, we analyze unattributed node classification tasks, which require \textit{equivariance}, i.e., the change of node labels should be reflected correspondingly in the output. To address the mentioned problems, a na\"ive idea is to still assign different node embeddings, but to eliminate such artefacts by an ensemble of multiple labelings. For training, the labeling is randomly sampled every time we process a data sample; during inference, an ensemble of multiple random labelings is adopted for a sample. 
In this way, the nodes are distinguishable given any labeling, but such artefacts are smoothed out by the ensemble average. 

Our theoretical analysis, however, shows that such a na\"ive treatment does not work well for node classification.
An equivariant GNN is unable to solve equivariant node classification problems where multiple outputs are appropriate for an input graph. 

To this end, we propose a generalized equivariance property that is more suited to unattributed node classification. We further propose a \techname approach, which assigns multiple labelings during training but only updates the parameters with the best labeling. For inference, we also assign multiple labelings and make a prediction according to the best one. In this way, \techname asymptotically achieves our generalized equivariance property, and works well for multi-output equivariant node classification.

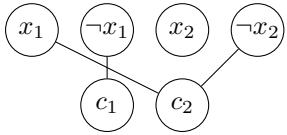
\begin{figure}
    \centering
    % \resizebox{}{}{}
    \begin{tikzpicture}
    \tikzstyle{place}=[circle,draw=black,minimum size=0.7cm,inner sep=0pt]
    \node [place] (x1) at (0, 0) {$x_1$};
    \node [place] (nx1) at (1, 0) {$\neg x_1$};
    \node [place] (x2) at (2, 0) {$x_2$};
    \node [place] (nx2) at (3, 0) { $\neg x_2$};
    \node [place] (c1) at (1, -1.0) {$c_1$}
        edge (nx1);
    \node [place] (c2) at (2, -1.0) {$c_2$}
        edge (x1)
        edge (nx2);
    % \draw [dotted] (x1) -- (nx1);
    % \draw [dotted] (x2) -- (nx2);
    \end{tikzpicture}
    \caption{A SAT formula can be represented by a graph, where a node $x_i$ is a \textit{literal} (a variable or its negation) and a node $c_i$ is a \textit{clause} (disjunction of literal nodes). The coresponding SAT formula is the conjunction of clauses, and in this example, it is $\neg x_1 \land (x_1 \lor \neg x_2)$.}
    \label{fig:example}
\end{figure}

We evaluated our approach on two unattributed node classification applications, maximum independent set solving (MIS) and propositional satisfiability solving (SAT).
Experimental results show that our approach successfully alleviates the limitations of existing GNNs when encoding unattributed graphs, where the number of errors drops by 39\% in the MIS problem and 76\% in the SAT problem. 

\section{Methodology}

In this section, we first present the problem formulation and analyze the equivariance property on unattributed graphs. 
Then, we present our \techname approach to address equivariant node classification.

\subsection{Problem Formulation}\label{sec:formulation}

A problem on unattributed graphs can be formalized as predicting output $Y$ given a graph $X$. A predicate function $H(X, Y)$, specific to a task, determines if $Y$ is appropriate for a given $X$. The predicate is true if and only if $Y$ is an appropriate solution for $X$. 

For an unattributed graph $G=\langle V, E\rangle$, the input can be fully represented by an adjacency matrix $X \in \left\{0, 1\right\}^{n \times n}$, where $n$ is the number of nodes. 
In a node classification task, the output is a matrix $ Y \in \mathbb R^{n \times k}$ for $n$ nodes and $k$ classes. 

% A graph neural network (GNN) takes the adjacent matrix $X$ and initial node embeddings $E_0$ as input and outputs $Y$.

To analyze how node indexes affect (or shall not affect) a GNN, we introduce the notation $S_n$ to represent the permutation group for $[n]$. 
Given $\pi \in S_n$, the action of $\pi$ on an unattributed graph $X\in\{0,1\}^{n\times n}$ is defined as $\left(\pi (X)\right)_{i,j} = X_{(\pi^{-1}(i)),(\pi^{-1}(j))}$, and its corresponding action on $Y \in \mathbb R^{n\times k}$ is given by $\left(\pi (Y)\right)_{i,c} = Y_{(\pi^{-1}(i)),c}$, i.e., $\pi$ denotes the same shuffle on the rows and columns of $X$, as well as the rows of $Y$. 
Here, $\pi$ is the mapping from node indexes to permuted indexes. Thus, $\pi^{-1}$ is retrieving the original node indexes in $X$ and $Y$ from the permuted indexes $i$ and $j$ in $\pi(X)$ and $\pi(Y)$. %Thus, $\pi^{-1}(\cdot)$ serves as the subscript of embeddings $\bm{e}$. 

\subsubsection{Equivariance.}
We now formulate the \textit{equivariance property} of node classification tasks. It essentially asserts that for any permutation $\pi \in S_n$, 
\begin{equation}\label{eqn:H}
H(X, Y) \text{\quad implies\quad} H(\pi (X), \pi (Y))
\end{equation}
That is to say, if we permute the order of nodes, the solution should be changed correspondingly.

Suppose for every $X$ there exists a unique $Y$ satisfying $H(X,Y)$, the mapping from $X$ to $Y$ can be modeled by a function $h$ and the equivariance property becomes the form that we commonly see
% \wenjiest{Similar to graph classification, the mapping from $X$ to $Y$ is usually a function, and we may denote $Y=h(X)$ if $H(X,Y)$ holds. Then, the equivariance property becomes}
\begin{equation}
h(\pi (X)) = \pi ( h(X) ) \label{equ:equivariance}
\end{equation}
for every permutation $\pi \in S_n$. 

In the above formulation, we define the equivariance property of a node classification task. In fact,
% invariance and 
equivariance can also be said in terms of GNN output $f(X)$,
given by 
\begin{equation}
f(\pi(X)) = \pi(f(X))\label{eqn:gnneq}
\end{equation}

\subsection{Limitations of Existing GNNs on Unattributed Graphs}\label{ss:limitEquiv}
We analyze the limitations of existing GNNs on unattributed graphs. As mentioned in Section~\ref{ss:intro}, previous approaches for unattributed graph either assign random labels to nodes~\cite{allamanis2017learning,WeiGDD20} or assign the same embedding to all nodes~\cite{li2018combinatorial,selsam2018learning}. 
When they are applied to unattributed node classifications, they suffer from at least one of the two limitations: 1) node distinction and 2) equivariance property. 

% However, they cannot distinguish one node from another. Further, when these approaches are applied to unattributed, they achieve undesired equivariance property 

\subsubsection{Node Distinction.}

We first consider distinguishing different nodes in a graph.
The state-of-the-art approaches~\cite{li2018combinatorial,selsam2018learning,zwj} assign all nodes with the same embedding, and thus, the model cannot distinguish different nodes effectively. Consider a common graph convolutional network (GCN), which learns the hidden representation for a node by encoding the node vector with the neighbors via a set of fully-connected layers. In the example given by Figure~\ref{fig:misexample}, all four nodes will have the same hidden representation, because every node is represented by the same embedding and all nodes have the same neighboring information.

\subsubsection{Equivariance Property.}
We now consider the equivariance property of node classification for unattributed graphs, which is believed to be important for various GNN applications~\cite{chen2018supervised,azizian2020expressive}. 

In node classification for unattributed graphs, if the node index changes, the output would change accordingly, shown in Eqn~(\ref{eqn:H}). Thus, it is tempting to design an equivariant GNN satisfying Eqn~(\ref{eqn:gnneq}) for node classification tasks, as suggested by~\newcite{DBLP:journals/tnn/WuPCLZY21}. 
Otherwise, the GNN would be sensitive to labeling artefacts~\cite{allamanis2017learning,WeiGDD20}, if it does not satisfy some form of equivariance. Previous work achieves the equivariance property~(\ref{eqn:gnneq}) by using the same embeddings for all nodes~\cite{li2018combinatorial,selsam2018learning,zwj}.

However, we hereby show that an equivariant GNN satisfying~(\ref{eqn:gnneq}) will fail on node classification problems, where multiple outputs are appropriate. In other words, the mapping from $X$ to $Y$ is not a function, and given an input $X$, there exists multiple $Y$ such that $H(X,Y)$ holds. Usually, GNN predicts one appropriate $Y$ by a function $Y=f(X)$.%Yet, an equivariant GNN approximates $Y$ by a function $Y=f(X)$.

\begin{figure}
    \centering
    \begin{tikzpicture}
    \tikzstyle{place}=[circle,draw=black,minimum size=0.5cm,inner sep=0pt]
    \node [place] (x1) at (0, 0) {$4$};
    \node [place] (x2) at (1, 0) {$3$}
        edge (x1);
    \node [place] (x3) at (1, 1) {$2$}
        edge (x2);
    \node [place] (x4) at (0, 1) {$1$}
        edge (x3)
        edge (x1);
    \end{tikzpicture}
    \caption{Graph $C_4$, a circle of length 4. This graph is auto-isomorhpic under $\pi:1\mapsto 2, 2\mapsto 3, 3\mapsto 4, 4\mapsto 1$.}
    \label{fig:misexample}
\end{figure}
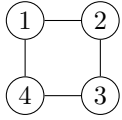
We show the drawback of  equivariant GNNs with an example of a non-trivial auto-isomorphic graph, i.e., there exists a non-identity permutation $\pi$ such that $\pi (X)= X$. If \eqref{eqn:gnneq} holds, then
$\pi (X) = X$ implies $f(X) = \pi (f(X))$. 
This means that GNN output must be the same for all corresponding nodes shuffled by $\pi$.

This, unfortunately, may be a bad solution for various tasks. Consider the maximum independent set (MIS) problem that selects the largest number of vertices that are not directly connected. In Figure~\ref{fig:misexample}, for example, $\{1,3\}$ is an MIS and $\{2,4\}$ is also an MIS. However, an equivariant GNN cannot predict any MIS in this example, because there exists a permutation $\pi$ (e.g., $\pi :1\mapsto 2, 2\mapsto3,3\mapsto4, 4\mapsto1$) essentially tying the output of all nodes.

It should be mentioned that we show the limitation by an example of non-trivial auto-isomorphism. Our analysis is suggestive to real applications, where graphs often have similar local structures.

\subsection{Our Solution}
\label{ss:solution}

We start with a na\"ive attempt to address both node distinction and equivariance in GNNs. With further analysis, we propose a generalized equivariance property and our \techname approach.
\subsubsection{A Na\"ive Attempt.}
To address the limitation of the node distinction, 
a na\"ive idea is still to assign node embeddings by randomly labeling the nodes, but to use an ensemble of different node labelings to eliminate artefacts~\cite{murphy2019relational,sato2021random}.
% For training, the node labels are randomly assigned in every epoch. 
% For inference, we 
For training, node labels are assigned randomly; this serves as a way of data augmentation, and can be thought of as training an ensemble over epochs. 
During inference, it assigns multiple random labels and uses an average ensemble for prediction. 

% However, this na\"ive idea still leaves the limitation in equivariant property.

% \subsubsection{Equivariant Property.}

However, it is not appropriate if we directly apply such a na\"ive idea to equivariant node classification. The standard cross-entropy training is essentially to 
\begin{equation*}
    \operatorname*{minimize}\limits_{\omega} \sum_{(X, Y) \in \mathcal D} \ \  \sum_{\pi \in S_n} \sum_{i=1}^n D_{\mathrm{KL}} \left( (\pi ( Y ))_i\ ||\  f_i (\pi (X); \omega) \right),
\end{equation*}
where $\omega$ denotes the trainable parameters in a GNN, $\mathcal D$ is the training set, and $D_{\mathrm{KL}}$ is the Kullback--Leibler divergence between the predictions and the ground truth. $(\pi(Y))_i$ and $f_i(\pi(X))$ are the $i$th row in a matrix, representing the target and predicted distributions of a node. 
However, the training objective will enforce the GNN to learn the same prediction of nodes under auto-isomorphism, because
 for every $\pi\in S_n$, the KL objective requires $\pi (Y) = f(\pi (X))$.  For some auto-isomorphic permutation $\tau$, i.e., $\tau (X) = X$, this implies $f(X) = f(\tau (X)) =  \tau (Y) $. Since $Y = f(X)$, we will have $f(X) = \tau( f(X))$. This is precisely the limitation that we have analyzed in Section~\ref{ss:limitEquiv}, namely, auto-isomorphism tying the prediction of a GNN.

To address this issue, we propose a desired generalized equivariance property. 
% Further, we also propose a \techname approach, that satisfies the desired property and addresses the limitation.

\subsubsection{Generalized Equivariance Property.} An equivariant GNN satisfying~(\ref{eqn:gnneq}) fails for equivariant node classification, because it unreasonably assumes the output is a function of input, i.e., approximating (\ref{eqn:H}) by~(\ref{equ:equivariance}). 

We relax this constraint for multi-output node classification and analyze the desired form of equivariance in this setting. We denote $\mathcal H(X)=\{Y: H(X, Y)\}$ be the set of all correct outputs given a graph $X$. 
The training set typically provides one groundtruth solution $Y_*\in \mathcal H(X_*)$ for a specific graph $X_*$, as usually one solution suffices in real applications (and this coincides with GNNs whose output is a function of input).

We would define $\mathcal H_*(\cdot)|_\mathcal{D}$ as a minimal equivariant subset of $\mathcal H(\cdot)$ such that $Y_*\in \mathcal H_*(X_*)$, with the domain restricted to $\mathcal D=\{X: X=\pi(X_*) \text{ for some } \pi\in S_n\}$.

This starts from defining \begin{align}\mathcal H_*(X_*) = \left\{ \gamma (Y_*) : \gamma \in S_n \text{\ and \ } \gamma (X_*) = X_* \right\}, \label{eqn:Fstar}
\end{align}
which essentially endorses multiple correct outputs other than the given $Y_*$ due to self-isomorphism $\gamma$.

Then, the equivariance property suggests $\mathcal H_*(X) = \pi \left( \mathcal H_*(X_*)\right)$, if $X = \pi (X_*)$ for some $\pi$. Here, we abuse the notation as $\pi(\mathcal H_*(X_*))\overset{\Delta}{=}\{\pi(Y):Y\in \mathcal H_*(X_*)\}$.

We would like to design a neural network predicting a correct solution, i.e., \begin{equation}
f(X_*) \in \mathcal H_*(X_*).
\label{equ:equivar}
\end{equation}
Due to the equivariance of $\mathcal H_*$, we have
\begin{equation}
f(\pi (X_*)) \in \mathcal H_*(\pi (X_*)) = \pi (\mathcal H_*(X_*)).
\label{equ:equpi}
\end{equation}

By the definition of $\mathcal H_*$ in \eqref{eqn:Fstar}, Eqn~\eqref{equ:equivar} implies that there exists $\gamma_1 \in S_n$ such that $\gamma_1 (X_*) = X_*$ and $f(X_*) = \gamma_1 (Y_*)$. Likewise, Eqn~\eqref{equ:equpi} implies that there exists $\gamma_2 \in S_n$ such that $\gamma_2 (X_*) = X_*$ and $f(\pi (X_*)) = \pi \gamma_2 (Y_*)$. Combining these and denoting $\gamma_2 \gamma_1^{-1}$ by $\gamma$, we see that there exists $\gamma\in S_n$ such that
\begin{equation}
\gamma (X_*)  = X_* \text{\quad and\quad} f(\pi (X_*)) = \pi \gamma (f(X_*)). \label{equ:gen-equ}
\end{equation}
We call \eqref{equ:gen-equ} the \textit{generalized equivariance property}. In fact, \eqref{eqn:gnneq} is a special case of \eqref{equ:gen-equ}, where $\gamma$ is an identity permutation. 
However, we relax~\eqref{eqn:gnneq} by allowing an additional auto-isomorphic permutation $\gamma$ in the solution space, and thus, it does not suffer from the limitation in Section~\ref{ss:limitEquiv}.

The analysis shows that an ideal GNN for unattributed node classification should satisfy \eqref{equ:gen-equ} rather than \eqref{equ:equivariance}.

\begin{figure}
    \centering
      \includegraphics[width=\linewidth]{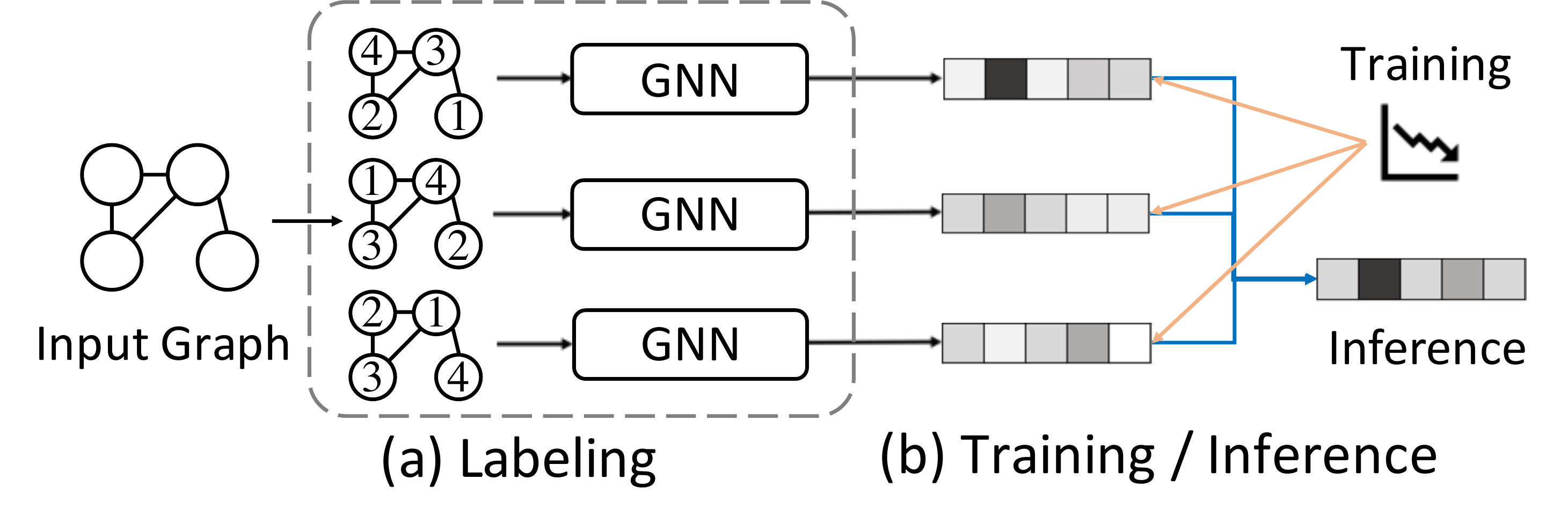}
    \caption{An overview of our \techname approach.}
    \label{fig:overview}
\end{figure}

\subsubsection{\techname.} Inspired by the above analysis, we propose a simple yet effective approach, \techname,  which asymptotically satisfies~\eqref{equ:gen-equ}. The overview of our approach is shown in Figure~\ref{fig:overview}. 
%\techname contains two parts: training and inference, which are introduced separately.

% \underline{\emph{\textbf{Training.}}}\label{sec:train} % + permutation v1~vn subset
For training, \techname assigns nodes with a random permutation of labels. A node is represented by a table-lookup embedding based on the assigned label. To satisfy~\eqref{equ:gen-equ}, we dynamically sample multiple label assignments for an input graph in each epoch, but only train the GNN with the best labeling (i.e., having the lowest loss). When processing the graph in the next epoch, we re-assign node labels and lookup for a (possibly) different preferred embedding by other random permutations. 

Formally, we allocate $\bm e_1, \cdots, \bm e_N$ as embedding parameters in our GNN model, where $N$ is the total number of embeddings; these embeddings have not been associated with any graph or node.
%In each epoch when processing a graph that has nodes $V=\{v_1,\cdots, v_n\}$ with $n\le N$, we randomly sample a permutation $\pi \in S_n$. Then, a node $v_i$ is represented by $\bm e_{\pi^{-1}(i)}$, where $\pi^{-1}$ retreives the embedding the graph neural network could be denoted by $\operatorname{GNN}(\bm e_{\pi^{-1}(1)}, \cdots \bm e_{\pi^{-1}(n)})$, where edge information is implicit in the GNN structure. 
In each epoch when processing a graph that has nodes $V=\{v_1,\cdots, v_n\}$ with $n\le N$, we randomly sample a permutation $\pi \in S_n$. By the convention of our paper, $\pi$ operates on adjacency matrices; consequently, a node $v_i$ is now represented by $\bm e_{\pi^{-1}(i)}$ for GNN processing.%. Then, a node $v_i$ is represented by $\bm e_{\pi(i)}$ and the graph neural network could be denoted by $\operatorname{GNN}(\bm e_{\pi(1)}, \cdots \bm e_{\pi(n)})$, where edge information is implicit in the GNN structure. 

We repeat this sampling process $K$ times, and compute the loss of these permutations when fed to GNN. Finally, we select the permutation that has the lowest loss as the final permutation for training.
In different training epochs, these permutations are re-sampled even for the same data sample. 
% Compared with an arbitrary fixed labeling of nodes~\cite{boldi2012injecting,allamanis2017learning}, our \techname reduces the artefacts of node labels. 

The training process can be described as
\begin{equation}
    \operatorname*{minimize}\limits_{\omega} \sum_{(X, Y) \in \mathcal D}  \min_{\pi \in S_n} \sum_{i=1}^n D_{\mathrm{KL}}  \left( (\pi (Y))_i \ || \  f_i (\pi (X); \omega) \right) ,\label{equ:pdltraining}
\end{equation}
where we train the model with the best labeling $\pi$, which is  a permutation of both $X$ and $Y$ matrices (Section~\ref{sec:formulation}).  %  We summarize the training process in Algorithm~\ref{alg:train}. 

Consider the inference of a $k$-way classification problem. The GNN with permutation $\pi_m$ outputs a probability as $\big (p_1^{(m)}, \cdots, p_k^{(m)}\big)$. %=\operatorname{GNN}\big(\bm e_{\pi_m^{-1}(1)}, \cdots, \bm e_{\pi_m^{-1}(k)}\big)$.
% , and this ensemble member predicts a label by $c^{(m)}=\operatorname{argmax}_i\{p_i^{(m)}\}$.
% Eventually, 
We pick the prediction that has the highest joint predicted probability (product of node probabilities). Eventually, our \techname approach predicts a label by $c=\operatorname{argmax}_i\{p_i\}$. % We summarize the inference process in Algorithm~\ref{alg:train}. %\todo{add a formula}

Our \techname does not suffer from the limitation of equivariance, because our network is not an equivariant function as in~\eqref{equ:equivariance}. However, we will achieve the generalized equivariance property \eqref{equ:gen-equ} asymptotically, because the labeling of $X$ is optimized out  (except for auto-isomorphism) during training and inference by taking the preferred $\pi$, detailed below.

\subsubsection{Theoretical Analysis.}

We show that the inference of our \techname asymptotically satisfies the generalized equivariance property for node classification, if we have enough sampled permutations. We also draw a connection with Expectation--Maximization (EM) algorithms.

During our inference, we assign multiple labelings to a graph, and pick the prediction that has the highest predicted probability as our output. Formally, we consider the joint predicted probability of a graph $X$ with labeling $\tau$ as
\begin{equation}
s(X, \tau) = \prod_{i=1}^n \ \max_{j=1, \cdots, k} f_{ij}(\tau(X)),
\end{equation}
where $f$ denotes a GNN function, outputing an $n\times k$ matrix ($n$: the number of nodes, $k$: the number of category). The $i$th row is the predicted distribution of the $i$th node. 

During inference, we have multiple labelings $\tau$. We pick the best one that maximizes $s(X,\tau)$, given by
\begin{equation}
\tau_*^{(X)} = \operatorname*{argmax}_{\tau \in S_n} s(X, \tau). %s(\tau (X)).
\label{eqn:tau}
\end{equation}

The prediction of our \techname is
\begin{equation}
\hat Y(X) = \left(\tau_*^{(X)}\right)^{-1} \left( f\left(\tau_*^{(X)} (X)\right)\right). \label{eqn:y}
\end{equation}
The formula follows our convention of adjacency matrix representations. When we perform node labeling, we permute the indexes of both $X$ and $Y$ by $\tau_*^{(X)}$. After GNN processing, we need an inverse permutation $\big(\tau_*^{(X)}\big)^{-1}$ to obtain the predictions for $X$, because our prediction should be corresponding to the original graph $X$, rather than $\tau_*^{(X)}(X)$. 

\medskip
\begin{theorem}
$\hat Y(\cdot)$ achieves generalized equivariance, i.e., for any graph $X$ and permutation $\pi \in S_n$, there exists $\gamma\in S_n$ such that $\gamma (X)  = X \text{\;and\;} \hat Y(\pi (X)) = \pi \gamma (\hat Y(X))$.
\end{theorem}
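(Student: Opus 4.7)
The plan is to unfold both sides of the target identity using the defining formula~\eqref{eqn:y} of $\hat Y$, then read off $\gamma$ from the resulting algebra. The single driving observation is that the score $s(X,\tau)$ depends on $\tau$ only through $\tau(X)$, which yields the identity $s(\pi(X),\tau) = s(X,\tau\pi)$ for all $\pi,\tau\in S_n$. Reindexing the argmax via $\tau\mapsto\tau\pi^{-1}$ then shows both that $\max_\tau s(\pi(X),\tau)=\max_\tau s(X,\tau)$ and, more importantly, that $\sigma := \tau_*^{(\pi(X))}\pi$ is itself a maximizer of $s(X,\cdot)$.

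With this in hand, I would plug $\sigma$ into~\eqref{eqn:y}:
\[
\hat Y(\pi(X)) \;=\; \bigl(\tau_*^{(\pi(X))}\bigr)^{-1} f\bigl(\tau_*^{(\pi(X))}\pi(X)\bigr) \;=\; \pi\,\sigma^{-1}\, f\bigl(\sigma(X)\bigr),
\]
and propose $\gamma := \sigma^{-1}\tau_*^{(X)}$ as the candidate auto-isomorphism. A direct telescoping gives
\[
\pi\gamma\,\hat Y(X) \;=\; \pi\sigma^{-1}\tau_*^{(X)}\bigl(\tau_*^{(X)}\bigr)^{-1}f\bigl(\tau_*^{(X)}(X)\bigr) \;=\; \pi\sigma^{-1}f\bigl(\tau_*^{(X)}(X)\bigr).
\]
This matches the previous display exactly when $f(\sigma(X)) = f(\tau_*^{(X)}(X))$, and $\gamma(X) = \sigma^{-1}(\tau_*^{(X)}(X)) = X$ exactly when $\sigma(X) = \tau_*^{(X)}(X)$. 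Both requirements therefore reduce to the single condition that the two maximizers $\sigma$ and $\tau_*^{(X)}$ of $s(X,\cdot)$ produce the same permuted adjacency matrix.

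Verifying this condition is the main obstacle. Since $s(X,\tau)$ depends on $\tau$ only through $\tau(X)$, two maximizers can disagree on $\tau(X)$ only when the maximum is achieved by several distinct permuted graphs, i.e., when there is a genuine tie across different orbits under $\mathrm{Aut}(X)$. I would handle this by assuming the $\operatorname{argmax}$ in~\eqref{eqn:tau} is resolved by a deterministic tie-breaking rule that depends only on $\tau(X)$ (for instance, choosing the lexicographically smallest adjacency matrix $\tau(X)$ among the maximizers). Such a rule is in any case implicit whenever $\hat Y$ is regarded as a well-defined single-valued function, and it immediately gives $\sigma(X) = \tau_*^{(X)}(X)$, after which the calculation in the previous paragraph closes the proof with $\gamma = \sigma^{-1}\tau_*^{(X)} \in \mathrm{Aut}(X)$.
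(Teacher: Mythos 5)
Your proposal is correct and follows essentially the same route as the paper's proof: both exploit the fact that the argmax problems for $X$ and $\pi(X)$ are related by the reindexing $\tau\mapsto\tau\pi$, conclude that the two optimizers agree up to an automorphism of $X$, and then push that automorphism through the definition of $\hat Y$ (your $\gamma=\sigma^{-1}\tau_*^{(X)}$ is exactly the paper's $\gamma$ after unwinding $\sigma=\tau_*^{(\pi(X))}\pi$). The one substantive difference is that the paper simply asserts that the two optima ``should be achieved by the same element,'' i.e.\ $\tau_*^{(X)}(X)=\tau_*^{(\pi(X))}\pi(X)$, whereas you correctly observe that this can fail when the maximum of $s(X,\cdot)$ is attained on several distinct cosets of $\mathrm{Aut}(X)$ (distinct permuted adjacency matrices with tied scores), and you repair it with a deterministic tie-breaking rule depending only on $\tau(X)$. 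That refinement is a genuine improvement in rigor over the published argument rather than a deviation from it; without some such convention the paper's step \eqref{eqn:res2} is not justified in the presence of ties.
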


\begin{proof}
Consider any graph $X$ and permutation $\pi$.
Replacing $X$ by $\pi (X)$ in Eqn~\eqref{eqn:tau}, we have
\begin{equation}
% \tau_*^{(\pi (X))} = \operatorname*{argmax}_{\tau \in S_n} s(\pi(X), \tau \pi (X)). \label{eqn:pitau}
\tau_*^{(\pi (X))} = \operatorname*{argmax}_{\tau \in S_n} s(\pi(X), \tau). \label{eqn:pitau}
\end{equation}
Notice that  Eqns~\eqref{eqn:pitau} and Eqn~\eqref{eqn:tau} are essentailly the same problem, and that their optima should be achieved by the same element, i.e., $\tau_*^{(X)}(X)=\tau_*^{(\pi(X))}\pi(X)$.

This essentially means that the two permutations $\taua$ and $\taub\pi$ yield the same result on $X$, implying that they are the same, except for an auto-isomorphic permutation. In other words, there exists $\gamma$ such that $\gamma (X) = X$ and
$\taua = \taub\pi\gamma$, which can be rearranged as
\begin{equation}
\tau_*^{(\pi ( X))} = \tau_*^{(X)} \gamma^{-1} \pi^{-1}. \label{eqn:taurelation}
\end{equation}

Replacing $X$ by $\pi (X)$ in Eqn~\eqref{eqn:y}, we have
\begin{align}
\hat Y(\pi (X)) & = (\tau_*^{(\pi (X))})^{-1} (f(\tau_*^{(\pi (X))} \pi (X)))\label{eqn:res1} \\
& = \pi \gamma (\tau_*^{(X)})^{-1} (f(\tau_*^{(X)} \gamma^{-1} \pi ^{-1} \pi (X)))\label{eqn:res2} \\
& = \pi \gamma (\tau_*^{(X)}) ^{-1} ( f(\tau_*^{(X)} (X)))\label{eqn:res3} \\
& = \pi \gamma ( \hat Y(X)), \label{eqn:res4}
\end{align}
where \eqref{eqn:res2} is due to the substitution with~\eqref{eqn:taurelation}; \eqref{eqn:res3} is due to the cancellation of $\pi^{-1}\pi$ and the auto-isomorphism of $\gamma$, i.e., $\gamma(X)=X$; and \eqref{eqn:res4} is due to the definition of $\hat Y$ in~\eqref{eqn:y}.
\end{proof}

Our Preferential Labeling is also related to EM algorithms.
\begin{theorem}
The training of Preferential Labeling in (\ref{equ:pdltraining}) is a hard Expecatation--Maximization algorithm with a uniform prior on $S_n$.
\end{theorem}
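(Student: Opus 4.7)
The plan is to expose the latent-variable probabilistic model under which (\ref{equ:pdltraining}) is precisely the alternating update of a hard Expectation--Maximization scheme, and then verify the E-step and the M-step in turn. First I would treat the labeling permutation $\pi\in S_n$ as the latent variable and place the uniform prior $p(\pi)=1/n!$ on it. The GNN then supplies a conditional likelihood $p(Y\mid X,\pi;\omega)$ by reading its softmax output on the relabelled graph $\pi(X)$ as a per-node categorical likelihood scoring the row-shuffled targets $\pi(Y)$; explicitly, $\log p(Y\mid X,\pi;\omega)=\sum_{i,c}(\pi(Y))_{i,c}\log f_{i,c}(\pi(X);\omega)$, so that row $i$ of the GNN output is used to predict the label of the original node $\pi^{-1}(i)$.

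Second, I would reduce the inner KL sum of (\ref{equ:pdltraining}) to a negative log-likelihood up to an $(\omega,\pi)$-independent constant. A direct expansion gives $\sum_{i=1}^n D_{\mathrm{KL}}((\pi(Y))_i\|f_i(\pi(X);\omega))=-\log p(Y\mid X,\pi;\omega)+C(Y)$, where $C(Y)=\sum_i\sum_c(\pi(Y))_{i,c}\log(\pi(Y))_{i,c}$ gathers only target terms. This $C(Y)$ does not depend on $\omega$; nor does it depend on $\pi$, since $\pi$ merely reorders the rows of $Y$ and a change of summation variable restores the original rows. Combined with the uniform log-prior $\log p(\pi)=-\log n!$, which is also constant in $\pi$, minimising the inner KL sum over $\pi$ is equivalent to maximising $\log p(\pi,Y\mid X;\omega)=\log p(\pi)+\log p(Y\mid X,\pi;\omega)$ over $\pi$, i.e., computing the MAP assignment of the latent variable.

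Third, I would match the two hard-EM steps to the nested optimisations in (\ref{equ:pdltraining}). The hard E-step selects, for each $(X,Y)$ and current parameters $\omega^{(t)}$, the MAP latent $\pi^{(t+1)}=\arg\max_\pi p(\pi\mid X,Y;\omega^{(t)})$; by the previous paragraph this equals $\arg\min_\pi\sum_i D_{\mathrm{KL}}((\pi(Y))_i\|f_i(\pi(X);\omega^{(t)}))$, which is exactly the inner minimum used by \techname. The M-step then maximises $\sum_{(X,Y)\in\mathcal D}\log p(\pi^{(t+1)},Y\mid X;\omega)$ over $\omega$; dropping the $\omega$-independent terms yields $\min_\omega\sum_{(X,Y)\in\mathcal D}\sum_i D_{\mathrm{KL}}((\pi^{(t+1)}(Y))_i\|f_i(\pi^{(t+1)}(X);\omega))$, which is the outer minimisation in (\ref{equ:pdltraining}) with $\pi$ frozen at its preferred value. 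Composing the two steps reproduces one epoch of \techname.

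The main obstacle I expect is not analytical but notational: one has to pin down that indexing row $i$ of the GNN output on $\pi(X)$ against the target $(\pi(Y))_i$ corresponds to scoring the label of the original node $\pi^{-1}(i)$, so that the posited $p(Y\mid X,\pi;\omega)$ is a genuine distribution over $Y$ rather than over a relabelled variable, and that the target-only term $C(Y)$ is genuinely $\pi$-invariant. Once this bookkeeping is fixed the EM correspondence follows by rearranging the KL identity above; the sampling-based approximation of $\min_{\pi\in S_n}$ in practice is what makes the algorithm ``asymptotic'' rather than exact, but does not affect the structural equivalence to hard EM.
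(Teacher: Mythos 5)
Your proposal is correct and follows essentially the same route as the paper's proof: treat $\pi$ as a latent variable with uniform prior, identify the inner KL minimisation with MAP selection of $\pi$ (the hard E-step, since the target-entropy term and the uniform log-prior are constants in $\pi$), and identify the outer minimisation over $\omega$ with the M-step. The paper states this correspondence in three sentences without spelling out the $\pi$-invariance of the constant or the explicit E/M decomposition, so your version is simply a more detailed rendering of the same argument.
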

\begin{proof} The labeling $\pi$ can be thought of as a latent variable in the task of mapping a graph $X$ to output $Y$.
The $\min$ operator in (\ref{equ:pdltraining}) is to seek $\pi\in S_n$ maximizing
the likelihood of output given input and the latent labeling, denoted by $P(Y|X,\pi)$. With the assumption of uniform prior $P(\pi)$ for $\pi\in S_n$, this is equivalent to cross-entropy training with one latent labeling $\pi$ that maximizes the posterior $P(\pi |X,Y)\propto P(Y|X,\pi)P(\pi)$, known as hard EM~\cite{hardEM}.
\end{proof}

This easy theorem provides further intuition on our \techname approach. EM algorithms are known for handling multi-modal mixtures of distributions, similar to  multi-output node classification. The training of our \techname is also analogous to the E-step, which determines the fitness of the sample to a mixture component. Our approach adopts a hard EM variant that selects a single best permutation, because full marginalization of $S_n$ is intractable.
Also, we assume a uniform prior for $S_n$, which is particularly suitable for eliminating labeling artefacts.

\section{Experiments}
We conducted experiments on two node classification tasks. We chose state-of-the-art or standard GNN architectures, but compare our approach with various embedding strategies.

\subsection{Competing Methods}
\label{sec:baselines}

\subsubsection{Static Labeling.} The static labeling assigns an embedding based on the identity of a node (e.g., $x_1$ and $c_1$ in Figure~\ref{fig:example}), although such identity does not represent meaningful semantics in different samples. Static learning is widely applied in previous work~\cite{boldi2012injecting,allamanis2017learning}.

\subsubsection{Same Embedding.} This baseline assigns all nodes in the unattributed graph with the same embedding. This is adopted in previous work~\cite{li2018combinatorial,zwj} to eliminate the artefacts of node labeling.

\subsubsection{Random Labeling.} The random labeling assigns an embedding randomly during training and inference. This is a special case of our approach with $K = 1$, and no actual preferential training is performed.

\subsubsection{Degree Feature.} An intuitive way to encode a node without labeling artefact is by its degree information, which captures some local information of the node. In this baseline, we use $1 / (d_{v}+1)$ as a one-dimensional, non-learnable embedding feature, where $d_v$ is the degree of node $v$. We use the above formula so that the feature is in $(0, 1]$.

\subsubsection{Degree Ranking Embedding.} The drawbacks of using degree information as a single numerical feature are its low-dimensionality and non-learnability. In this baseline, we extend the idea by embedding the ranking of node degrees. Specifically, we sort all nodes by the degrees in descending order, and a node having $i$th largest degree is encoded by $i$th embedding vector $\bm e_i$.

\begin{table}[!t]
    \centering
    \resizebox{.93\linewidth}{!}{
        \begin{tabular}{rlr}
            \toprule
            \textbf{Row} \# &\textbf{GCN~\cite{li2018combinatorial}}&\textbf{Accuracy}\\
            \midrule
            1&Same & 75.59\% \\
            2&Degree Feature & 73.22\%\\
            3&Degree Ranking Embedding & 71.58\% \\
            4&Static Labeling & 74.57\% \\
            % Static \& Ensemble-10 (Avg) & 85.6\% $\pm$ 3.0\% \\
            5&Random Labeling & 75.28\%  \\
            \midrule
            6&Preferential Labeling-10 & \textbf{85.04}\% \\
            % Dynamic \& Ensemble-10 (Avg) & \textbf{92.0}\% $\pm$ \textbf{2.8}\%\\
            \bottomrule
        \end{tabular}
    }
    \caption{The results for MIS solving. ``\techname-10'' indicates 10 random labelings in both training and inference.}
    \label{table:resultmis}
\end{table}

\begin{table*}[!t]
    \centering
    \resizebox{0.78\textwidth}{!}{
        \begin{tabular}{rlrrrrr}
            \toprule
            &&\multicolumn{5}{c}{\textbf{Error Rate}}\\
            \cmidrule{3-7}
            \textbf{Row} \#& \textbf{NLocalSAT~\cite{zwj}} & \textbf{Test-5} & \textbf{Test-10} & \textbf{Test-20} & \textbf{Test-40} & \textbf{Avg.}\\
            \midrule
            1&Same & 5.26\% & 8.17\% & 15.03\% & 27.62\% & 14.02\%\\
            2&Degree Feature  & 5.31\% & 8.37\% & 14.25\% & 24.94\% & 13.22\%\\
            3&Degree Ranking Embedding  & 5.45\% & 10.23\% & 16.17\% & 28.04\% & 14.97\%\\
            4&Static & 6.11\% & 9.86\% & 16.89\% & 28.88\% & 15.44\%\\
            % Static \& Avg-Voting (20)& 6.12\% & 5.90\% & 6.29\% & 6.97\% \\             
            % Static \& Avg-Voting (50) & 6.06\% & 5.83\% & 6.29\% & 6.97\% \\
            % Static \& Ensemble (20) & 6.76\% & 5.90\% & 6.29\% & 6.97\% \\            
            5&Static \& Inference-10 (Averaging) & 5.00\% & 8.77\% & 15.74\% & 29.70\% & 14.80\%\\
            6&Static \& Inference-10 (Max Prob.) & 1.77\% & 3.65\% & 7.86\% & 16.22\% & 7.38\%\\
            7&Random & 3.38\% & 6.17\% & 12.70\% & 23.66\% & 11.48\%\\
            % Dynamic \& Ensemble (20) & 6.47\% & 4.89\% & 4.94\% & 5.36\%\\
            8&Random \& Inference-10 (Averaging) & 3.39\% & 6.07\% & 12.42\% & 23.34\% & 11.31\%\\
            9&Random \& Inference-10 (Max Prob.) & 2.72\% & 5.03\% & 11.37\% & 22.06\% & 10.30\%\\
            \midrule
            10&Preferential Labeling-10 (Max Prob.) & \textbf{1.13}\% & \textbf{1.68}\% & \textbf{1.81}\% & \textbf{5.24}\% & \textbf{2.47}\%\\
            
            \bottomrule
        \end{tabular}
    }
    \caption{The results for SAT solving. ``Test-$k$'' indicates a test set where each sample has $k$ variables. ``Inference-$m$'' indicates $m$ random labelings during inference. 
    %``Preferential Labeling-10'' indicates $10$ random labelings as ensemble members in both training and inference of \techname.
    }
    \label{table:basicResults}
\end{table*}

\subsection{Experiment \uppercase\expandafter{\romannumeral1}: MIS Solving}
We first evaluate our \techname on solving the maximum independent set (MIS). 
In graph theory, an \textit{independent set} is a set of nodes without any edge. An independent set is \textit{maximum}, if it has the largest number of nodes among all independent sets. 
%A graph may contain multiple maximal independent sets; the largest maximal independent set is \textit{maximum}.
MIS solving is an NP-hard problem that aims to find out a maximum independent set from a graph. 

For an input graph, the goal of the GNN in this task is to predict a binary label for each node, deciding whether a node is in the MIS.
To induce an MIS from model predictions, we use a simple search algorithm. We first sort all nodes in descending order based on the predicted probability that the node is in the MIS. Then, we iterate over nodes in order and select the top node into the MIS;  its neighbors are removed from the list. The process is iterated until we have processed the entire node list. In this way, the selected nodes are guaranteed to be an independent set. Our evaluation determines whether it is maximum.

\textbf{Model.} 
In this experiment, we adopt the state-of-the-art model~\cite[GCN;][]{li2018combinatorial} for MIS solving.  \newcite{li2018combinatorial} use the same embedding for all nodes. The model contains 20 graph convolutional layers, which are regularized by dropout with rate of 0.1.  For the hidden size of all layers used in this model, we set it to 128. For training, we use Adam~\cite{kingma2014adam} to train the model with learning rate $10^{-4}$ on a single Titan RTX.

\textbf{Dataset.} We follow the data synthesis process in previous work~\cite{li2018combinatorial} and generate 173,751, 20,000, 20,000 graphs for training, development, and test, respectively. The number of nodes in a graph is generated from a uniform distribution $U[100, 150]$. 

\textbf{Results.} Table~\ref{table:resultmis} shows the results for MIS solving. Since our post-processing ensures the output is an independent set, the performance evaluation focuses on whether it is maximum. If the predicted set has the same number of nodes as the groundtruth MIS, we say the model solves this MIS correctly; otherwise, the model makes an error. 
% which represents how many percentages of MIS solving problems the model can solve. 

We observe that Static Labeling (Row~4) has low performance as it introduces labeling artefacts.  Same and Random Labelings (Rows~1 and~5) eliminate such artefacts and perform better. 
 The Degree Feature (Row~2) and Degree Ranking Embedding (Row 3) suffer from the limitation of the equivariance property mentioned in Section~\ref{ss:limitEquiv}, and
 perform worse than other baselines. 
 
 By contrast, \techname (Row 6) is able to eliminate labeling artefacts, and at the same time, achieve the desired generalized equivariance property in Eqn~\eqref{equ:gen-equ}. Its performance is higher than all competing approaches, with the number of errors dropping by 39\% from the best baseline. 
% We further analyze the performance of the proposed Generalized Equivariance Property on this model. 
% The approach that achieves this property (Row 6) has better performance than the approaches that fail to achieve Generalized Equivariance Property (Rows 1-5). 

\subsection{Experiment \uppercase\expandafter{\romannumeral2}: SAT Solving}

We further evaluate \techname on the SAT solving problem. % as another unattributed node classification task.
The propositional satisfiability problem (SAT) is one of the most fundamental problems in computer science. 
A propositional formula is said to be \textit{satisfiable}, if there exists an assignment of propositional variables to either True or False that makes the formula True; such assignment is known as a \textit{certificate}. 

We consider a specific setting of SAT solving, where the given formula is known to be satisfiable, and the goal is to predict a certificate, i.e., whether a variable should be assigned with True or False. This is a key step in SAT solvers.

\textbf{Model.}
The GNN model and settings are generally adopted from the state-of-the-art NLocalSAT~\cite{zwj}.

A SAT formula is represented as a bipartite graph, where a node is either a clause or a literal (see Figure~\ref{fig:example} for an example). The nodes are represented by identifiers, which are labeling artefacts.
In our experiment, we applied a convolution-based NLocalSAT model~\cite{zwj}, which achieves state-of-the-art performance for SAT solving. \newcite{zwj} use the same embedding for all clause/literal nodes. The model has $16$ convolutional layers, regularized by a dropout rate of $0.1$. In our model, we perform \techname for literals and clauses from respective candidate labelings/embeddings. 

\textbf{Dataset.} We used the SAT dataset in~\newcite{zwj}.
The training and development sets contain 500K and 396K SAT formulas, respectively. The number of variables in a formula is generated from a uniform distribution $U[10, 40]$, whereas the number of clauses is generated from $U[2, 6]$. For testing, the dataset contains four sets of different levels of difficulty. Specifically, the number of variables in a formula is 5, 10, 20, or 40 in each test set, denoted by Test-5, Test-10, Test-20, or Test-40, each containing 40K, 20K, 10K, or 5K test formulas.
%The number of clauses is all generated from $U[2, 6]$.

\textbf{Results.}
Table~\ref{table:basicResults} shows the results for SAT solving, where the performance of a model is evaluated by the formula-level error rate, i.e., if the predicted assignment does not make the formula true, we say the model makes an error. 

As mentioned, Static Labeling (Row~4) introduces artefacts of node identities, whereas using the same embedding (Row~1) is unable to distinguish different nodes well. The Degree Feature baseline alleviates these issues and performs better than Rows~1 and~4 in this task. However, they do not perform well in general.

%We investigate the effectiveness of the inference process of \techname, which can actually be applied to different base models (Rows~6 and~9). We observe that the performance is consistently better than the original method (Rows~4 and~7). 

We analyze the performance of an equivariant GNN that satisfies Eqn~\eqref{equ:equivariance}. This can be achieved by Random Labeling for training (Rows~8 and~9), by explicitly introducing averaging ensembles during inference (Rows~5 and~8) as the na\"ive attempt introduced in Section~\ref{ss:solution}, or by using the same embedding (Row~1) or the degree embedding (Rows~2 and~3).
Their performance, although better than Static Labeling (Row 4), appears inadequate. 

We then evaluate the effect of Preferential Labeling in the inference stage, applied to different baseline models. This
relaxes \eqref{eqn:gnneq} but satisfies generalized equivariance \eqref{equ:gen-equ} during inference. We see the error rates (Rows~6 and~9) are considerably lower than the GNN as an equivariant function. 

Moreover, \techname explicitly reduces labeling artefacts during training. With the inference algorithm controlled, our approach largely outperforms  training with Static and Random Labelings (Row~10 vs.~Rows~6 and~9).

\begin{figure}[!t]
    \centering
      \includegraphics[width=0.8\linewidth]{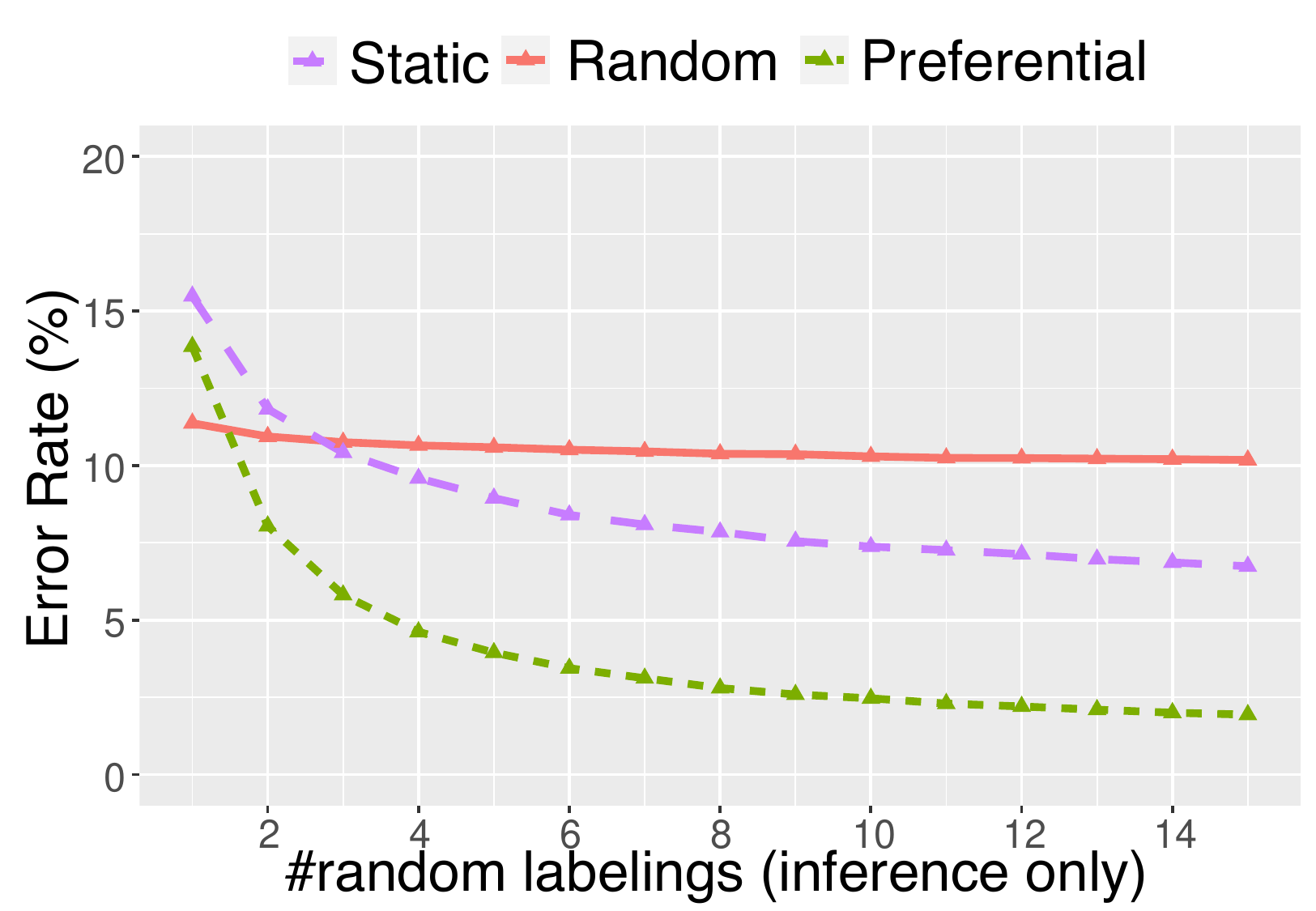}
    \caption{Error rate versus the number of random labelings during inference. We compare the embedding strategies for training, and all variants use the labeling with the maximum predicted probability for inference.}
    \label{fig:inference}
\end{figure}
\begin{figure}[!t]
    \centering
      \includegraphics[width=0.8\linewidth]{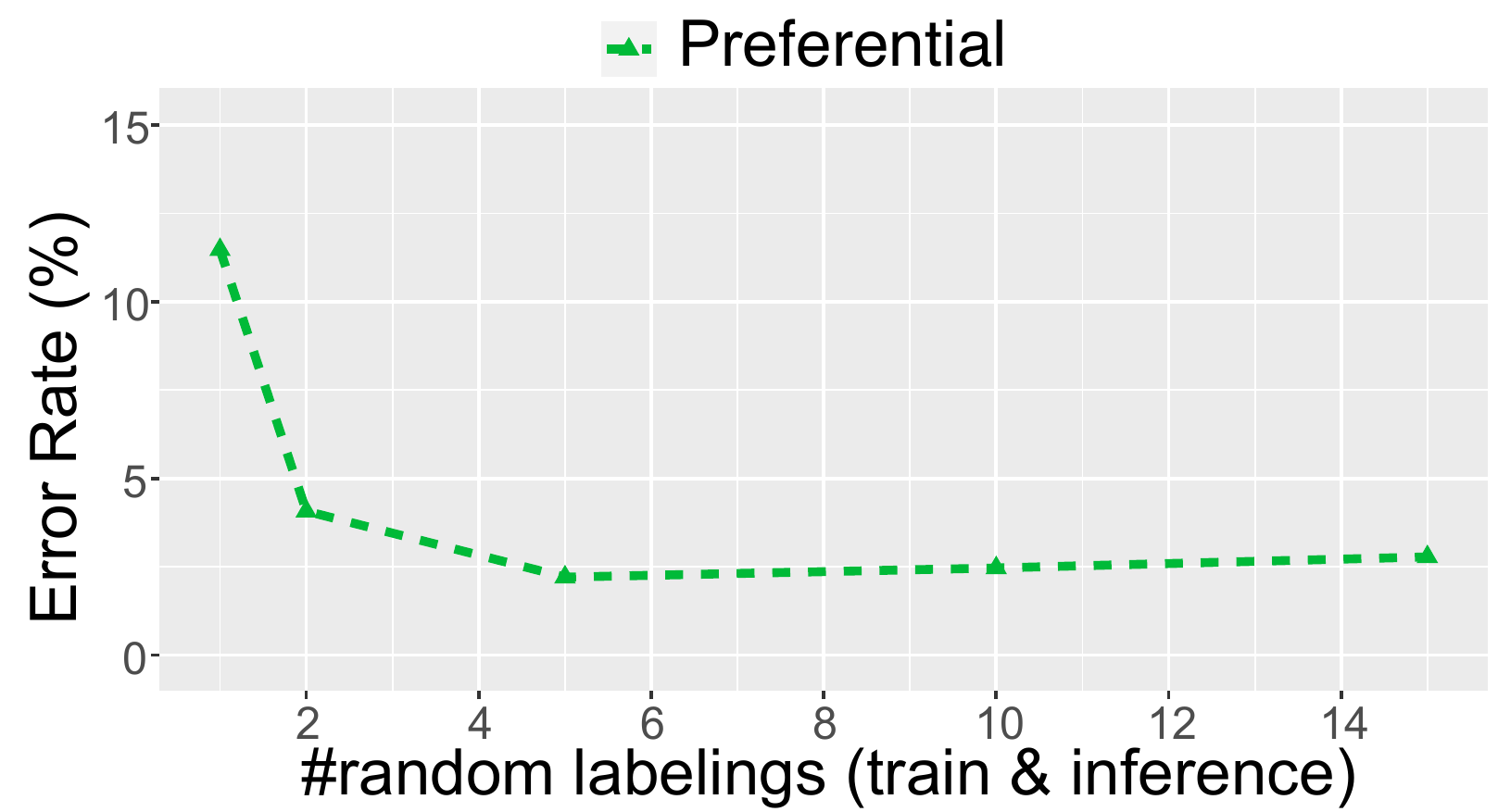}
    \caption{Error rate versus the number of random labelings during both training and inference.}
    \label{fig:training}
\end{figure}

We analyze how the number of random labelings affects model performance during inference, shown in Figure~\ref{fig:inference}. We observe that all models achieve higher performance with more labelings. However, the improvement for Random Labeling is marginal, as it suffers from the limitation of an equivariant function in a fundamental way, regardless of the number of labelings. 

Static Labeling and \techname do not achieve good performance if the number of labelings is small (e.g., $\le 2$). A plausible explanation is that the few labelings during inference may not comply with the training, resulting in high variance. However, the performance improves largely when we have more labelings, as these models are able to relax the function equivariance but achieve generalized equivariance asymptotically. Specifically, our proposed \techname is consistently better than Static Labeling by a large margin, as our model is explicitly trained with the best permutation in a hard EM fashion.

Finally, we analyze in Figure~\ref{fig:training} how the number of both training and inference labelings affect the performance of \techname. In this figure, we use the average error rate (the results on other settings are available at the link in Footnote~1). Results show that if the number of labeling is one, it reduces to random labeling, yielding poor performance. However, the error rate drops significantly when the number increases, and becomes stable when the number is great than or equal to 5. This shows that our approach could still be applied when computational resources are restricted.

\section{Related Work}

% \paragraph{Graph neural networks (GNNs).} 
Graph neural networks (GNNs) have been widely researched in recent years~ \cite{scarselli2008graph,battaglia2018relational}. GNNs have a variety of applications in different domains, ranging from social networks~\cite{kipf2016semi,hamilton2017inductive}, knowledge graphs~\cite{hamaguchi2017knowledge}, and programming source code~\cite{mou2016convolutional,mou2018tree}.

A common GNN architecture is the graph convolutional network (GCN,~\citeauthor{kipf2016semi}, \citeyear{kipf2016semi}).
Recently, researchers have designed various GNN architectures, such as gated graph neural network~\cite{li2015gated}, graph attention networks~\cite{DBLP:conf/iclr/VelickovicCCRLB18}, and Transformer-based GNN~\cite{DBLP:conf/aaai/CaiL20}. However, the focus of this paper is not the architecture design. Rather, we focus on node representations in unattributed graphs.   

To represent a node in graphs, DeepWalk~\cite{DBLP:conf/kdd/PerozziAS14} learns the node presentation by predicting the neighbors in an unsupervised way. Such pretraining-style node embedding does not generalize to new graphs. 
In certain applications, researchers use domain-specific information as labels. For example,  a node in a knowledge graph is represented by text~\cite{DBLP:conf/ijcai/LinLS16}  % [TBCNN, L2S, TreeGen, Allamanis, LLS].
% In certain applications, researchers apply a node by contextual information. For example, a code identifier in an abstract syntax tree is represented by subtokens in \newcite{DBLP:conf/sigsoft/AllamanisBBS15} 
and a variable in code analysis/generation tasks is often denoted by its name or subtokens~\cite{allamanis2017learning,suntreegen,xiong2018learning}. %in \newcite{DBLP:conf/ijcai/LinLS16}.

The embedding of nodes in unattributed graphs is not extensively addressed in previous literature. Existing work generally applies either an arbitrary labeling~\cite{allamanis2017learning} or the same embedding~\cite{li2018combinatorial,selsam2018learning,zwj}, which suffer from several limitations as discussed in this paper.
To address this, we propose %\techname for invariant graph classification tasks, and Preferential 
\techname for unattributed node classification tasks. 

% \paragraph{Graph Isomorphism Test.} 
Our \techname is also related to, but different from \newcite{xu2018powerful} and \newcite{garg2020generalization}, where they show that GNNs are limited in determining graph isomorphism. We instead showed equivariant GNNs are limited in solving one-to-many equivariant problems, and further proposed the desired generalized equivariant property. Moreover, our \techname does not suffer from the above limitation, because it assumes nodes are unlabeled, but we have (preferential) labelings.

\section{Conclusion}
In this paper, we address the task of node classification of unattributed graphs. We analyze the limitations of existing GNNs, showing that an equivariant GNN may not solve an equivariant node classification task, when multiple outputs are correct. 
We propose a generalized equivariance property, which allows an additional auto-isomorphic permutation.
Based on our analysis, we further propose \techname that samples multiple permutations and uses the best one for training and inference; theoretical analysis shows that our \techname achieves the desired generalized equivariance property asymptotically. 
We conducted extensive experiments on MIS solving and SAT solving tasks to demonstrate the effectiveness and generality of our approach.

\section*{Acknowledgments}
The work is supported in part by the National Key Research and Development Program of China under Grant No. 2019YFE0198100, the Innovation and Technology Commission of HKSAR under Grant No. MHP/055/19, National Natural Science Foundation of China under Grant No. 61922003, and the Natural Sciences and Engineering Research Council of Canada (NSERC) under grant No.~RGPIN2020-04465. Lili Mou is supported in part by the Amii Fellow Program, the Canada CIFAR AI Chair Program, and a donation from DeepMind. This research is also supported by Compute Canada (www.computecanada.ca).

\bibliography{example_paper}

\end{document}